%% file: main.tex
\documentclass[10pt]{article}

\usepackage[preprint]{tmlr}

\input{math_commands.tex}

\input{mathdef}

\usepackage{hyperref}
\usepackage{url}
\usepackage{graphicx, color}

\usepackage[noorphans]{quoting}
\usepackage[square,sort,comma,numbers]{natbib}

\usepackage{amsmath}
\usepackage{amssymb}
\usepackage{mathtools}
\usepackage{amsthm}
\usepackage{bbm}
\usepackage{booktabs}
\usepackage{tabularx}
\usepackage{subcaption}
\usepackage{wrapfig}

\usepackage{caption}

\usepackage[most]{tcolorbox}

\usepackage[ruled, algo2e, vlined]{algorithm2e}
\SetKwInput{KwInput}{Input}
\SetKwInput{KwOutput}{Output}

\theoremstyle{plain}
\newtheorem{theorem}{Theorem}[section]

\theoremstyle{definition}

\theoremstyle{remark}

\allowdisplaybreaks

\newcommand{\visiblespacechar}{\texttt{\textvisiblespace}}
\newcommand{\visibletabchar}{\ensuremath{\rightarrow\mkern-7mu\vert}}

\usepackage{listings}
\usepackage{xcolor}
\definecolor{codegreen}{rgb}{0,0.6,0}
\definecolor{codegray}{rgb}{0.5,0.5,0.5}
\definecolor{codepurple}{rgb}{0.58,0,0.82}
\definecolor{backcolour}{rgb}{1,1,1}

\lstdefinestyle{mystyle}{
    backgroundcolor=\color{backcolour},   
    commentstyle=\color{codegreen},
    keywordstyle=\color{magenta},
    numberstyle=\tiny\color{codegray},
    stringstyle=\color{codepurple},
    basicstyle=\ttfamily\footnotesize, % font size
    breakatwhitespace=false,         
    breaklines=true,                 
    captionpos=b,                    
    keepspaces=true,                 
    numbers=left,                    
    numbersep=5pt,                  
    showspaces=false,                
    showstringspaces=false,
    showtabs=false,                  
    tabsize=4,
    frame=lines      % lines at top and bottom
}

\title{Improving the Diversity of LLM Outputs without a Trade-off}

\author{\name Ryoma Sato \email rsato@nii.ac.jp \\
  \addr National Institute of Informatics
}

\begin{document}

\maketitle

\begin{abstract}
  We propose DAST (Diversifying Arithmetic Sampling with TokenTour), a method that increases the diversity of LLM outputs without any change to the marginal distribution and with negligible generation-time overhead (a few microseconds). We observe that token IDs are often arranged in a meaningless order and reassign them so that tokens with similar meanings appear consecutively. This can be done in advance in a few hundred seconds per model, and the resulting order can be reused for all subsequent generations. By combining this order with arithmetic sampling (or quasi-Monte Carlo methods), we make similar tokens less likely to be generated across runs while preserving the distribution. Our method not only produces qualitatively good ideas but also significantly improves performance on the downstream task of ProtoQA.
\end{abstract}

\section{Introduction}

Real-world tasks do not always have a unique answer. Many LLM benchmarks evaluate accuracy by having the model produce a single answer, but in practice, it is desirable for an LLM to generate multiple ideas for the user to consider. Simply changing the random seed and generating again, however, may produce similar ideas.

Another setting in which diversity matters is solving problems with verifiable solutions~\cite{brown2024large,wang2026effectsampling,lamont20253d}. When it is possible to verify whether a problem has been solved, an effective strategy is to have an LLM generate multiple candidate solutions and adopt one that solves the problem, if any. Generating candidates that are as diverse as possible increases the likelihood that at least one is correct. Even in verifiable settings, as discussed by \citet{lee2025how}, obtaining a single output that passes the tests is not always sufficient: diversity among the passing outputs may itself be important.
Diverse outputs are also useful for reinforcement learning with verifiable rewards~\cite{yao2025diversity,li2026quasimotto,li2026beyond,li2026setpo,tuyls2026representation}. In reinforcement learning with verifiable rewards, training can stall when the outputs are either all correct or all incorrect~\cite{yu2025dapo,le2026no,xu2026prune,jiang2026tapo}, and diversifying LLM outputs alleviates this problem. Even when solutions are not necessarily verifiable, selecting from diverse outputs using an automatic metric can improve performance~\cite{freitag2022high}.

Many methods have been proposed for generating diverse ideas. Approaches include changing the sampling temperature, perturbing the prompt~\cite{wang2025diversified}, sampling without replacement~\cite{kool2019stochastic,shi2020incremental}, and modifying the beam search criterion~\cite{vijayakumar2018diverse,meister2021determinantal}. These approaches, however, generally have drawbacks such as changing the original distribution or increasing generation time.

Arithmetic Sampling~\cite{vilnis2023arithmetic} is one of the rare methods that can increase diversity without changing the marginal distribution or increasing generation time, while allowing parallelization just as in i.i.d. generation. The method constructs an arithmetic code from the LLM's generation distribution. To generate $K$ samples, it first samples a reference position $p \sim \text{Unif}([0, 1])$ and then sets the position of sample $i = 1, 2, \ldots, K$ to $p_i = (p + \frac{i - 1}{K}) \mod 1$. This guarantees that the $K$ samples are evenly spaced over $[0, 1]$. Figure \ref{fig: arithmetic} illustrates next-token generation with $K = 3$. When the next-token distribution is represented on a circle, the $K$ random numbers are placed exactly 120 degrees ($\frac{360}{K}$) apart. With i.i.d. sampling, all random numbers may fall in the same region, whereas arithmetic sampling always spaces them evenly, making repeated generation of the same token less likely. Even with this arrangement, each individual arm is sampled uniformly over 360 degrees, so the marginal distribution is unaffected. The generation overhead is also negligible.

\begin{figure}[t]
  \centering
  \includegraphics[width=0.6\textwidth]{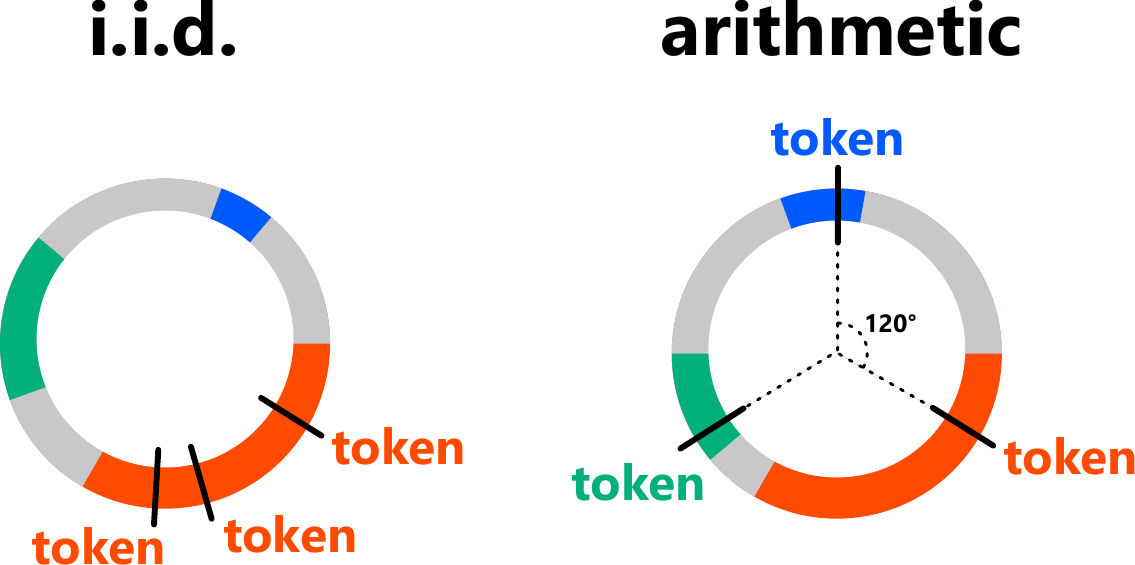}%
  \caption{Comparison of i.i.d. and arithmetic sampling when generating $K = 3$ samples. With i.i.d. sampling, all three may fall on the same token. Arithmetic sampling places the three random numbers exactly 120 degrees ($\frac{1}{3}$ of a turn) apart, making them less likely to fall on the same token. Note that even with arithmetic sampling, each arm is sampled uniformly over 360 degrees.} \label{fig: arithmetic}
\end{figure}

Arithmetic sampling, however, does not account for the ordering of the vocabulary around the circle. An LLM's vocabulary can contain tokens with the same meaning but different token IDs. This can arise for various reasons, including singular and plural forms such as gift and gifts, differences in capitalization such as tour and Tour, and synonyms such as tale and story. Thus, even when arms are 120 degrees apart and fall on different token IDs, those IDs may correspond to tokens with the same meaning (Figure \ref{fig: dast}, left). In such cases, arithmetic sampling can still generate tokens with the same meaning. In particular, in the next-token distribution, implausible tokens have probabilities close to $0$ and occupy arcs too short to see, while a set of similar, plausible tokens occupies a large area. Empirically, several kinds of similar tokens are often scattered around the circle, making this outcome more likely.

We observe that vocabulary ordering offers an opportunity for optimization ``for free.'' The order of vocabulary IDs has no meaning to an LLM and has therefore been overlooked. It is only when arithmetic sampling is used that the order becomes meaningful because of the relationships between the arms. Prior work recognized this opportunity but did not act on it. Indeed, \citet{parashar2025quasi} state that ``we limited ourselves to randomizing the vocabulary ordering and then performing arithmetic sampling,'' but did not optimize the vocabulary order. We are the first to consider how to assign vocabulary IDs for arithmetic sampling.

Finding an optimal vocabulary order is not straightforward. There are $n!$ permutations, and optimizing a permutation is often nontrivial, as in the traveling salesman problem. We consider obtaining an ordering of an LLM's vocabulary using WordTour \cite{sato2022word}, which optimizes the order of word embeddings to arrange them in one dimension. This approach has the advantages of requiring no calibration data and allowing the order to be reused once constructed.

We show that even at the scale of an LLM's vocabulary, a well-organized ordering can be obtained in a practical amount of time. We also show that using this vocabulary order increases the diversity of LLM outputs ``for free'' compared with the standard vocabulary order. In particular, our method significantly improves performance on the downstream task of ProtoQA, which measures the ability to generate diverse and valid outputs.

\begin{figure}[t]
  \centering
  \includegraphics[width=0.6\textwidth]{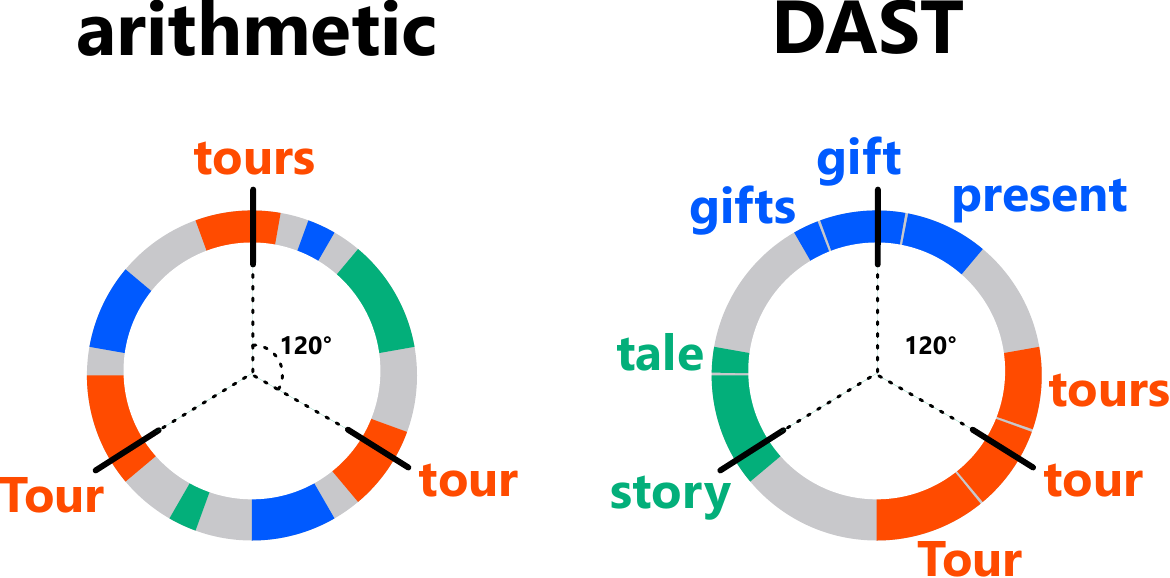}%
  \caption{Comparison of arithmetic sampling and our proposed method, DAST. LLM vocabularies generally have a disordered arrangement. Arms separated by 120 degrees are less likely to fall on exactly the same token ID, but may fall on similar tokens with different IDs. This is particularly likely in the next-token probability distribution, where irrelevant tokens have probabilities close to 0, while similar tokens that are plausible as the next token have large probabilities (areas). DAST uses WordTour to place tokens with similar meanings next to one another, making arms separated by 120 degrees more likely to fall on tokens with different meanings. Since this simply reassigns vocabulary IDs in advance, exactly the same inference program can produce exactly the same marginal distribution. When $K > 1$ samples are generated simultaneously, the only difference is that the samples are more diverse.} \label{fig: dast}
\end{figure}

\section{Proposed Method (DAST)}

Simply put, our proposed method, DAST, reorders the vocabulary using WordTour and applies arithmetic sampling.

Specifically, let $\mathcal{V} = \{1, 2, \ldots, n\}$ be the vocabulary of an LLM, and let $\boldv_i$ be the (input) embedding of token $i$. We reorder the vocabulary so that similar tokens are adjacent by solving the following optimization problem:\begin{align}
  \min_{\pi\colon \mathcal{V} \to \mathcal{V}} \sum_{i=1}^n \|\boldv_{\pi(i)} - \boldv_{\pi(i+1)}\| \label{eq: wordtour}
\end{align}
Here, $\pi(n + 1) = \pi(1)$. That is, we arrange the vocabulary around a circle, corresponding to the placement of samples using $\text{mod} 1$ in arithmetic sampling. The original WordTour solves optimization problem (\ref{eq: wordtour}) using the LKH solver~\cite{lkh_webpage}, whereas DAST constructs an initial tour with the nearest-neighbor heuristic and refines it with 2-OPT. We found that 2-OPT is sufficient for arithmetic sampling. Although the LKH solver can slightly reduce the objective value of problem (\ref{eq: wordtour}), the difference is small and does not translate into improvements in the diversity of arithmetic sampling or downstream task performance. Since 2-OPT closes almost the entire gap between the LLM's standard vocabulary order and the optimal order, is simple, and runs quickly, it is well suited to DAST's purpose.

This reordering requires only the model parameters. The absence of a need for calibration data is an advantage. For example, one could also consider reordering tokens based on the model's activation patterns, but this would require calibration data and would be more expensive. The input embeddings used by DAST come with the model ``for free,'' making reordering inexpensive.

Note that this reordering must be performed separately for each model because models may have different vocabularies. Once it has been performed for a model, however, the order can be reused for all of that model's outputs. For typical vocabularies of tens of thousands of tokens, reordering takes a few minutes. We also release the vocabulary orders used in our experiments, so users of these models do not need to perform the reordering themselves.

Once the reordered vocabulary is available, DAST performs arithmetic sampling as usual. It is extremely simple, but simplicity is a virtue. The reordered vocabulary can be plugged in without even changing the generation program or incurring generation overhead, yet DAST does increase the diversity of LLM outputs.

DAST inherits the desirable properties of arithmetic sampling. In particular, it guarantees that the desired result will be obtained if the number of samples $K$ is sufficiently large. Specifically, suppose that a desirable output $y$ exists for a prompt $x$. Assume that the user does not know $y$ in advance but can recognize it as desirable when the LLM presents $y$. Such settings range from casual situations, such as brainstorming gifts for a friend, to rigorous ones, such as RLVR. In these settings, the LLM should generate outputs that are as diverse as possible so that at least one matches $y$. The following theorem holds.

\begin{theorem}
  Suppose that all token probabilities of the LLM are positive, as is the case for typical LLMs with softmax outputs. For any token sequence $x$ and any token sequence $y$, there exists a number of samples $K$ such that $y$ is guaranteed to be obtained. That is,\begin{align}
  \Pr_{y_1, \ldots, y_K \sim \text{DAST}(x)}\left[\exists i \in [K], y_i = y\right] = 1
\end{align} holds. In contrast, no such $K$ exists for i.i.d. sampling.
\end{theorem}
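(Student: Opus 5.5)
The plan is to make the arithmetic-code picture precise and then exploit the even spacing of the $K$ points. Arithmetic sampling, in both the standard and the reordered vocabulary, is a deterministic map from a code point $u \in [0,1)$ to an output sequence. At each step the current subinterval is split among tokens in vocabulary order, with widths proportional to their conditional probabilities, and the token whose subinterval contains $u$ is emitted. I would state this nested-interval construction and observe a key fact: the set of code points that decode to a given complete sequence $y = (y^{(1)}, \ldots, y^{(T)})$, including termination with an end-of-sequence token, is a half-open interval $I_y \subseteq [0,1)$. By the chain rule its length is
\begin{align*}
|I_y| = \prod_{t=1}^{T} \Pr\bigl[y^{(t)} \mid x, y^{(<t)}\bigr] =: q_y ,
\end{align*}
and $q_y > 0$ by the positivity assumption. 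The DAST reordering only permutes the order within each split, so it moves $I_y$ but leaves its length unchanged.

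Next I would choose $K$ with $1/K \le q_y$, for instance $K = \lceil 1/q_y \rceil$. The DAST points are $p_i = (p + (i-1)/K) \bmod 1$, which form a coset of the cyclic grid $\frac{1}{K}\mathbb{Z}/\mathbb{Z}$. Viewed on the circle $\mathbb{R}/\mathbb{Z}$, any half-open arc of length at least $1/K$ contains exactly, or at least, one grid point. Since $I_y$ is an interval, and in particular an arc, of length $q_y \ge 1/K$, some $p_i$ lies in $I_y$ for every realization of $p$. That $i$ then decodes to $y_i = y$. This gives the event with probability $1$; in fact it holds deterministically.

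For the i.i.d. contrast, fix any $K$. The samples are independent draws from the model, each equal to $y$ with probability $q_y$. The probability that none equals $y$ is therefore $(1-q_y)^K$. Whenever $q_y < 1$ this is strictly positive, so the probability in the display is strictly below $1$ and no $K$ works. I would note explicitly the degenerate case $q_y = 1$, which positivity excludes whenever the vocabulary has at least two tokens, because then some competing token has positive mass.

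The main obstacle is the first step: verifying carefully that the decode preimage of a full sequence is a single interval, including the boundary conventions at the endpoints, and handling variable-length outputs through the EOS token. The wrap-around from the $\bmod 1$ also has to be treated cleanly, which is why I would phrase the covering argument on the circle rather than on $[0,1)$.
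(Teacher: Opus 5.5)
Your proposal is correct and follows the paper's proof essentially step for step. The paper also argues that $y$ occupies a contiguous interval of length $\text{P}[y]$ in the arithmetic code, so taking $K=\lceil 1/\text{P}[y]\rceil$ makes the arm spacing $1/K \le \text{P}[y]$ and some arm always lands in that interval, while for i.i.d.\ sampling the failure probability $(1-\text{P}[y])^K$ stays positive. Your extra care with half-open arcs and the EOS convention fills in details the paper leaves implicit. So does your observation that positivity, with at least two tokens, forces $\text{P}[y]<1$ (which the i.i.d.\ claim needs); none of this changes the argument.
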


\begin{proof}
  Let $K = \lceil \frac{1}{\text{P}[y]} \rceil$. In the arithmetic code, $y$ is assigned a contiguous interval of length $\text{P}[y]$. Since the distance between the arms in arithmetic sampling is $\frac{1}{K} \leq \text{P}[y]$, at least one arm falls in the interval for $y$, regardless of the initial position of the arms. Thus, there exists an $i \in [K]$ such that $y_i = y$. In contrast, with i.i.d. sampling, $\Pr[y_i = y] = \text{P}[y]$ and $\Pr[\forall i \in [K], y_i \neq y] = (1 - \text{P}[y])^K > 0$, so for any $K$, there is always a possibility that $y$ is not generated.
\end{proof}

\section{Experiments}

We demonstrate DAST's speed, its qualitative effectiveness, and its quantitative effectiveness on ProtoQA.

\subsection{Experimental Setup}

\begin{table}[t]
\centering
\caption{Vocabulary sizes of the models.}
\label{tab:vocab_size}
\small
\begin{tabular*}{\columnwidth}{@{\extracolsep{\fill}}lrrrr@{}}
\toprule
Model
& SmolLM2-1.7B
& Qwen2.5-1.5B
& Qwen2.5-3B
& Qwen2.5-7B \\
\midrule
$|V|$
& 49,152
& 151,936
& 151,936
& 152,064 \\
\bottomrule
\end{tabular*}
\end{table}

We conduct experiments using SmolLM2-1.7B-Instruct~\cite{allal2025smollm2}, Qwen2.5-1.5B-Instruct, Qwen2.5-3B-Instruct, and Qwen2.5-7B-Instruct~\cite{qwen2024qwen2}; in the tables and figures we omit the \texttt{-Instruct} suffix for brevity. These models span two model families and a range of sizes, providing a diverse testbed. Their vocabulary sizes are listed in Table \ref{tab:vocab_size}. The vocabularies are relatively large, making scalability important. We compare i.i.d. sampling, arithmetic sampling with the standard vocabulary order, and DAST. All of these preserve the marginal distribution; they are among the rare approaches that allow diversity to be increased without side effects, in a sense.

ProtoQA~\cite{boratko2020protoqa} is a dataset of commonsense questions without a unique answer. For example, 100 people are asked a question such as ``Name something that people usually do before they leave for work?'' Of these, 43 answer Shower, 30 answer Breakfast, and 7 answer Lock door. The dataset measures the ability to generate diverse ideas by evaluating answer coverage without counting repeated answers more than once. For example, if an LLM is asked to generate $K = 3$ answers and responds with Shower, Shower, Shower, it receives 43 points, the score for the unique answer. If it instead responds with Shower, Breakfast, Lock door, it receives 43 + 30 + 7 = 80 points. Thus, generating more diverse answers can yield a higher score. We use this dataset to evaluate the diversity of DAST.

\subsection{Speed Comparison}

\begin{table}[t]
    \centering
    \caption{The two types of overhead in DAST. ``One-time'' is the time required to construct the vocabulary order (nearest-neighbor search, tour construction, and 2-OPT combined), which can then be reused for all subsequent generations. The remaining columns show the time required to generate one token. ``Common'' denotes operations independent of the sampling method, such as the forward pass. DAST's per-generation overhead relative to i.i.d. sampling is about 2 microseconds, less than 0.4\% of the forward-pass time. Measurements were taken on an NVIDIA RTX PRO 6000 Blackwell GPU and an AMD EPYC 9534 CPU.}
    \label{tab:speed}
    \small
    \begin{tabular*}{\columnwidth}{@{\extracolsep{\fill}}lrrrrrr@{}}
    \toprule
    & One-time & Common
    & \multicolumn{4}{c}{Sampling overhead} \\
    \cmidrule(l){4-7}
    Model & (Order construction) & (Forward pass, etc.) & i.i.d. & token ID & DAST & DAST $-$ i.i.d. \\
    \midrule
    SmolLM2-1.7B  &  21.3 s &   503 $\mu$s & 8.6 $\mu$s & 10.1 $\mu$s & 10.5 $\mu$s & $+$1.9 $\mu$s \\
    Qwen2.5-1.5B  &  83.5 s &   530 $\mu$s & 8.5 $\mu$s &  9.9 $\mu$s & 10.3 $\mu$s & $+$1.8 $\mu$s \\
    Qwen2.5-3B    & 105.6 s &   764 $\mu$s & 8.6 $\mu$s & 10.0 $\mu$s & 10.6 $\mu$s & $+$2.0 $\mu$s \\
    Qwen2.5-7B    & 228.0 s & 1,215 $\mu$s & 8.9 $\mu$s & 10.4 $\mu$s & 10.9 $\mu$s & $+$2.0 $\mu$s \\
    \bottomrule
    \end{tabular*}
\end{table}

We measure the speed of DAST. DAST has two types of overhead: the one-time construction of the order and arithmetic sampling at each generation. Table \ref{tab:speed} reports the corresponding runtimes. Even the one-time computation can be completed in a few minutes, and the generation overhead is only a few microseconds, making it negligible.

\subsection{Vocabulary Ordering}

\begin{table}[t]
    \centering
    \caption{Objective values (\ref{eq: wordtour}) of the vocabulary orders (lower is better). The ``Held--Karp lower bound'' is a rigorous lower bound obtained from the dual problem; the true optimum is at least this value. Thus, the gap between DAST and the true optimum is bounded above by the ``DAST / lower bound'' column.}
    \label{tab:tour_objective}
    \small
    \begin{tabular*}{\columnwidth}{@{\extracolsep{\fill}}lrrrrr@{}}
    \toprule
    Model & Standard vocabulary IDs & DAST & Held--Karp lower bound & DAST / lower bound \\
    \midrule
    SmolLM2-1.7B & 212,363 & 151,905 & 148,736 & $+$2.13\% \\
    Qwen2.5-1.5B & 203,768 & 148,881 & 145,729 & $+$2.16\% \\
    Qwen2.5-3B   & 223,011 & 163,290 & 159,730 & $+$2.23\% \\
    Qwen2.5-7B   & 169,399 & 146,950 & 145,173 & $+$1.22\% \\
    \bottomrule
    \end{tabular*}
\end{table}

We analyze the vocabulary orders obtained by optimization. Table \ref{tab:tour_objective} reports the objective values for the models' standard vocabulary orders and the orders obtained by DAST. A lower bound for optimization problem (\ref{eq: wordtour}) can also be obtained from the dual problem. We report this value as a reference for assessing optimality. The models' standard vocabulary orders leave considerable room for optimization, and DAST successfully brings the objective values close to the lower bound.

\begin{table}[t]
    \centering
    \caption{The models' standard token ID orders and the token orders produced by DAST. Only selected segments are shown. The standard orders are completely disordered, whereas WordTour places similar tokens in contiguous segments.}
    \label{tab:wordtour_tokens}

    \small
    \setlength{\tabcolsep}{5pt}
    \renewcommand{\arraystretch}{1.08}

    \begin{tabularx}{\linewidth}{
        >{\raggedright\arraybackslash}p{0.18\linewidth}
        >{\raggedright\arraybackslash}X
        >{\raggedright\arraybackslash}X
    }
        \toprule
        \textbf{Model}
        & \textbf{Standard ID}
        & \textbf{DAST} \\
        \midrule

        \textbf{SmolLM2-1.7B}
        &
        \ttfamily
        \visiblespacechar Rim \newline
        \visiblespacechar equates \newline
        \visiblespacechar streaks \newline
        \visiblespacechar pharmacists \newline
        \textquotedbl... \newline
        Luck \newline
        dialog \newline
        jas \newline
        \visiblespacechar REG \newline
        \visiblespacechar Ngu \newline
        \visiblespacechar mixer \newline
        \visiblespacechar Jesuits
        &
        \ttfamily
        \visiblespacechar exceedingly \newline
        \visiblespacechar extremely \newline
        \visiblespacechar incredibly \newline
        \visiblespacechar exceptionally \newline
        \visiblespacechar extraordinarily \newline
        \visiblespacechar extraordinary \newline
        \visiblespacechar remarkable \newline
        \visiblespacechar incredible \newline
        \visiblespacechar amazing \newline
        \visiblespacechar astounding \newline
        \visiblespacechar astonishing \newline
        \visiblespacechar startling
        \\

        \midrule

        \textbf{Qwen2.5-1.5B}
        &
        \ttfamily
        .DB \newline
        \visiblespacechar popularity \newline
        \visiblespacechar gew \newline
        \visiblespacechar impr \newline
        setValue \newline
        FLAG \newline
        \visibletabchar max \newline
        \visiblespacechar bake \newline
        wy \newline
        \visiblespacechar Economic \newline
        \visiblespacechar encontr \newline
        \visiblespacechar fname
        &
        \ttfamily
        \visiblespacechar huge \newline
        \visiblespacechar enormous \newline
        \visiblespacechar immense \newline
        \visiblespacechar tremendous \newline
        \visiblespacechar tremendously \newline
        \visiblespacechar immensely \newline
        \visiblespacechar enormously \newline
        \visiblespacechar hugely \newline
        \visiblespacechar greatly \newline
        \visiblespacechar vastly \newline
        \visiblespacechar radically \newline
        \visiblespacechar drastically
        \\

        \bottomrule
    \end{tabularx}
\end{table}

Table \ref{tab:wordtour_tokens} presents representative segments selected from the orders for SmolLM2-1.7B and Qwen2.5-1.5B. While the models' standard orders are disordered, DAST places tokens with the same or similar meanings in contiguous segments. The complete orders produced by DAST can be viewed at the following links:

\begin{description}
  \item[SmolLM2-1.7B] \url{https://raw.githubusercontent.com/joisino/dast/refs/heads/main/HuggingFaceTB_SmolLM2-1.7B-Instruct.txt}
  \item[Qwen2.5-1.5B] \url{https://raw.githubusercontent.com/joisino/dast/refs/heads/main/Qwen_Qwen2.5-1.5B-Instruct.txt}
  \item[Qwen2.5-3B] \url{https://raw.githubusercontent.com/joisino/dast/refs/heads/main/Qwen_Qwen2.5-3B-Instruct.txt}
  \item[Qwen2.5-7B] \url{https://raw.githubusercontent.com/joisino/dast/refs/heads/main/Qwen_Qwen2.5-7B-Instruct.txt}
\end{description}

For all models, DAST successfully places tokens with the same meaning close to one another. These lists save users the effort of ordering the vocabulary themselves.

\subsection{ProtoQA}

\begin{figure}[t]
  \centering
  \includegraphics[width=\textwidth]{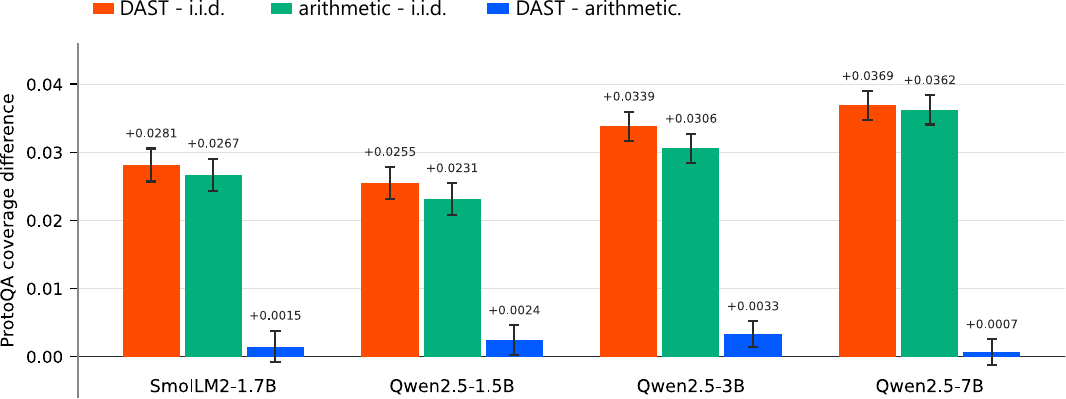}%
  \caption{Quantitative results on ProtoQA. A positive orange bar on the left indicates that DAST outperforms i.i.d. sampling. A positive green bar in the middle indicates that arithmetic sampling outperforms i.i.d. sampling. An orange bar on the left that is higher than the green bar in the middle, as well as a positive blue bar on the right, indicates that DAST outperforms arithmetic sampling. Error bars show 95 percent confidence intervals.} \label{fig: protoqa}
\end{figure}

Figure \ref{fig: protoqa} shows the experimental results on ProtoQA. We sample $3$ answers from the LLM using each method and evaluate their coverage of human answers. Following ProtoQA's Max Answers@3, the score is normalized so that it is 1 when the three largest answer clusters are covered and 0 when none are covered. To examine the differences between methods in detail, the figure shows score differences and their confidence intervals. DAST and arithmetic sampling achieve approximately 2 to 4 percentage points greater coverage than i.i.d. sampling, and DAST further improves coverage over arithmetic sampling by up to 0.33 percentage points. For Qwen2.5-1.5B and Qwen2.5-3B, the differences exceed the uncertainty indicated by the confidence intervals. This is a substantial gain given that DAST can be introduced with almost no overhead.

\subsection{Qualitative Example}

\begin{figure}[t]
  \centering
  \includegraphics[width=0.6\textwidth]{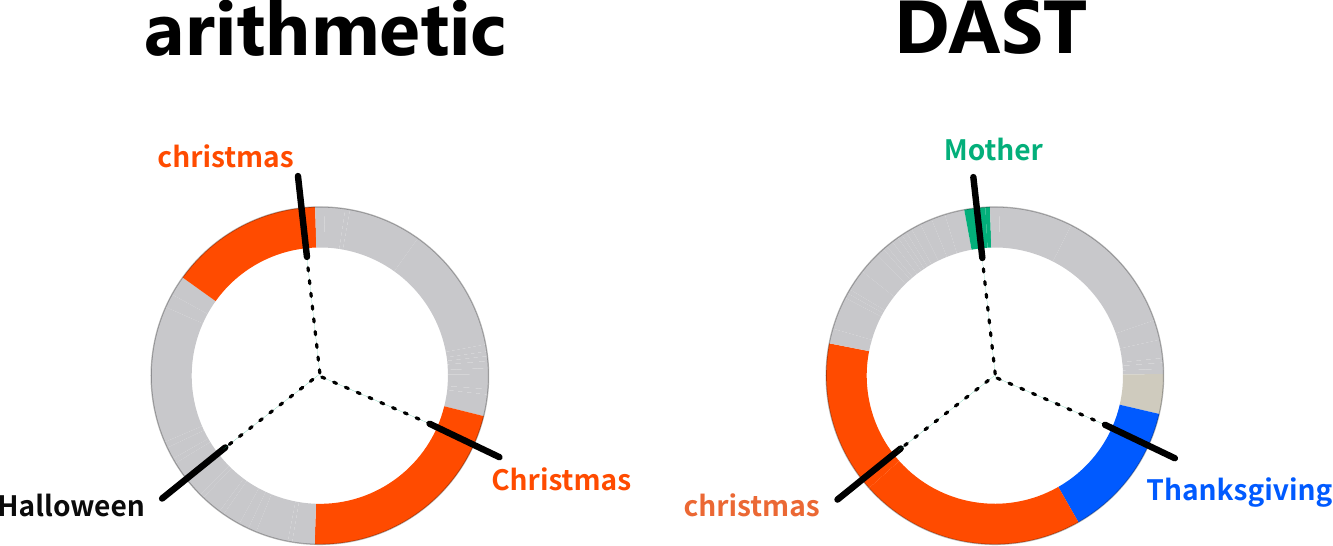}%
  \caption{Example answers from Qwen2.5-0.5B-Instruct, a smaller model used here for illustration, to the ProtoQA question ``name a day of the year when most people call home.'' The figure uses the actual token orders and arc lengths proportional to the output probabilities. In the standard token order, christmas and Christmas are far apart, and two arms of arithmetic sampling hit these two tokens, making the first tokens of the answers Christmas, christmas, Halloween. In DAST, the Christmas and christmas tokens are close together and are therefore not selected simultaneously; the first tokens of the outputs are christmas, Thanksgiving, Mother.} \label{fig: illustrative}
\end{figure}

Figure \ref{fig: illustrative} presents an end-to-end qualitative example using the answers of Qwen2.5-0.5B-Instruct, a smaller model used here for illustration, to the ProtoQA question ``name a day of the year when most people call home.'' In the standard token order, christmas and Christmas are far apart, and two arms of arithmetic sampling hit these two tokens, making the first tokens of the answers Christmas, christmas, Halloween. In DAST, the Christmas and christmas tokens are adjacent and are therefore not selected simultaneously; the first tokens of the outputs are christmas, Thanksgiving, Mother. In this way, DAST is less likely to select distinct tokens with the same meaning, allowing it to produce more diverse outputs.

\section{Related Work}

This section discusses the diversity of LLM outputs. The simplest way to increase diversity is to raise the sampling temperature~\cite{ippolito2019comparison,caccia2020language}. Although this increases diversity, it generally reduces quality~\cite{caccia2020language}. This contrasts with our focus on improving diversity without any loss of quality. This trade-off is common to most diversification methods. \citet{wang2025diversified} showed that, when obtaining multiple answers to the same question, perturbing the prompt produces more diverse outputs than generating repeatedly from exactly the same prompt. This method also exhibits a trade-off between diversity and quality depending on the strength of the perturbation. In a similar direction, \citet{wang2025multilingual} improve diversity by constructing prompts in multiple languages and cultural contexts. Methods that intervene in representations, such as G2~\cite{ruan2025g2}, STARS~\cite{zhu2026exploring}, and SemDiD~\cite{shi2025semantic}, exhibit a similar trade-off. A related but distinct line of work is sampling without replacement from language models~\cite{kool2019stochastic,shi2020incremental}. \citet{kool2019stochastic} proposed an efficient method for sampling without replacement from sequence models using the Gumbel-Top-k trick and lazy evaluation of Gumbel samples on the sequence tree. \citet{shi2020incremental} proposed a more efficient method for sampling without replacement using a trie. Although sampling without replacement may slightly improve diversity over sampling with replacement, it does not directly prevent duplicates among semantically equivalent sequences. Another popular heuristic is to modify the beam search criterion~\cite{vijayakumar2018diverse,meister2021determinantal}. \citet{vijayakumar2018diverse} encourage diversity by adding a scoring term that favors candidates less similar to previously selected samples. Determinantal Beam Search \cite{meister2021determinantal} uses ideas from determinantal point processes to select beam search candidates that have high likelihoods while being dissimilar to one another. Although these methods empirically increase diversity, they generally provide no guarantee of diversity in the final outputs and change the original distribution. Simpler heuristics include prompting an LLM to produce different answers~\cite{troshin2025asking}, but these also generally offer no guarantee of diversity in the final outputs and change the original distribution. They also make generation difficult to parallelize. \citet{wong2025simplestrat} improve diversity through step-by-step stratified sampling, but this can also change the distribution and generally increases processing time. Other studies seek to improve diversity through post-training, such as \citet{chung2025modifying} and \citet{li2025jointly}, but these require a certain amount of computation and, naturally, change the distribution. The same holds for methods such as BACo~\cite{wang2026optimizing}, which use more diverse base models as appropriate. A parallel line of work studies diversification in diffusion language models~\cite{wu2026time,lamont2026free}. Note that our paper assumes autoregressive LLMs and thus differs in scope from these studies.

\section{Conclusion}

We proposed DAST, a method that can increase the diversity of LLM outputs without changing the marginal distribution or increasing generation time, while allowing parallelization. DAST addresses the overlooked role of LLM vocabulary ordering in the literature and improves diversity by explicitly constructing an arrangement. DAST improved the arrangement and increased diversity both qualitatively and quantitatively.

\bibliography{main}
\bibliographystyle{abbrvnat}

\appendix

\end{document}

%% file: math_commands.tex
\usepackage{amsmath,amsfonts,bm}

\def\eqref#1{equation~\ref{#1}}
\def\1{\bm{1}}

\DeclareMathAlphabet{\mathsfit}{\encodingdefault}{\sfdefault}{m}{sl}
\SetMathAlphabet{\mathsfit}{bold}{\encodingdefault}{\sfdefault}{bx}{n}

%% file: mathdef.tex
\newcommand{\boldv}{{\boldsymbol{v}}}

%% file: main.bbl
\begin{thebibliography}{39}
\providecommand{\natexlab}[1]{#1}
\providecommand{\url}[1]{\texttt{#1}}
\expandafter\ifx\csname urlstyle\endcsname\relax
  \providecommand{\doi}[1]{doi: #1}\else
  \providecommand{\doi}{doi: \begingroup \urlstyle{rm}\Url}\fi

\bibitem[Allal et~al.(2025)Allal, Lozhkov, Bakouch, Blázquez, Penedo, Tunstall, Marafioti, Kydlícek, Lajarín, Srivastav, Lochner, Fahlgren, Nguyen, Fourrier, Burtenshaw, Larcher, Zhao, Zakka, Morlon, Raffel, von Werra, and Wolf]{allal2025smollm2}
L.~B. Allal, A.~Lozhkov, E.~Bakouch, G.~M. Blázquez, G.~Penedo, L.~Tunstall, A.~Marafioti, H.~Kydlícek, A.~P. Lajarín, V.~Srivastav, J.~Lochner, C.~Fahlgren, X.-S. Nguyen, C.~Fourrier, B.~Burtenshaw, H.~Larcher, H.~Zhao, C.~Zakka, M.~Morlon, C.~Raffel, L.~von Werra, and T.~Wolf.
\newblock {SmolLM2}: When smol goes big - {Data-Centric} training of a small language model.
\newblock \emph{arXiv}, 2025.
\newblock \doi{10.48550/arXiv.2502.02737}.
\newblock URL \url{https://arxiv.org/abs/2502.02737}.

\bibitem[Boratko et~al.(2020)Boratko, Li, O'Gorman, Das, Le, and McCallum]{boratko2020protoqa}
M.~Boratko, X.~Li, T.~O'Gorman, R.~Das, D.~Le, and A.~McCallum.
\newblock {ProtoQA}: A question answering dataset for prototypical {Common-Sense} reasoning.
\newblock In \emph{Proceedings of the 2020 Conference on Empirical Methods in Natural Language Processing, {EMNLP}}, pages 1122--1136, 2020.
\newblock \doi{10.18653/v1/2020.emnlp-main.85}.
\newblock URL \url{https://www.aclweb.org/anthology/2020.emnlp-main.85}.

\bibitem[Brown et~al.(2024)Brown, Juravsky, Ehrlich, Clark, Le, Ré, and Mirhoseini]{brown2024large}
B.~C.~A. Brown, J.~Juravsky, R.~Ehrlich, R.~Clark, Q.~V. Le, C.~Ré, and A.~Mirhoseini.
\newblock Large language monkeys: Scaling inference compute with repeated sampling.
\newblock \emph{arXiv}, 2024.
\newblock \doi{10.48550/arXiv.2407.21787}.
\newblock URL \url{https://arxiv.org/abs/2407.21787}.

\bibitem[Caccia et~al.(2020)Caccia, Caccia, Fedus, Larochelle, Pineau, and Charlin]{caccia2020language}
M.~Caccia, L.~Caccia, W.~Fedus, H.~Larochelle, J.~Pineau, and L.~Charlin.
\newblock Language {GANs} falling short.
\newblock In \emph{Proceedings of the 8th International Conference on Learning Representations, {ICLR}}, 2020.
\newblock \doi{10.48550/arxiv.1811.02549}.
\newblock URL \url{https://arxiv.org/abs/1811.02549}.

\bibitem[Chung et~al.(2025)Chung, Padmakumar, Roemmele, Sun, and Kreminski]{chung2025modifying}
J.~J.~Y. Chung, V.~Padmakumar, M.~Roemmele, Y.~Sun, and M.~Kreminski.
\newblock Modifying large language model {Post-Training} for diverse creative writing.
\newblock In \emph{Proceedings of the 2nd Conference on Language Modeling, {COLM}}, 2025.
\newblock \doi{10.48550/arXiv.2503.17126}.
\newblock URL \url{https://arxiv.org/abs/2503.17126}.

\bibitem[Freitag et~al.(2022)Freitag, Grangier, Tan, and Liang]{freitag2022high}
M.~Freitag, D.~Grangier, Q.~Tan, and B.~Liang.
\newblock High quality rather than high model probability: Minimum bayes risk decoding with neural metrics.
\newblock \emph{Transactions of the Association for Computational Linguistics}, 10:\penalty0 811--825, 2022.
\newblock \doi{10.1162/tacl_a_00491}.
\newblock URL \url{https://arxiv.org/abs/2111.09388}.

\bibitem[Helsgaun(2018)]{lkh_webpage}
K.~Helsgaun.
\newblock {LKH} ({Keld Helsgaun}), 2018.
\newblock URL \url{http://webhotel4.ruc.dk/~keld/research/LKH/}.

\bibitem[Ippolito et~al.(2019)Ippolito, Kriz, Sedoc, Kustikova, and Callison-Burch]{ippolito2019comparison}
D.~Ippolito, R.~Kriz, J.~Sedoc, M.~Kustikova, and C.~Callison-Burch.
\newblock Comparison of diverse decoding methods from conditional language models.
\newblock In \emph{Proceedings of the 57th Annual Meeting of the Association for Computational Linguistics, {ACL}}, pages 3752--3762, 2019.
\newblock \doi{10.18653/v1/p19-1365}.
\newblock URL \url{https://aclanthology.org/P19-1365}.

\bibitem[Jiang et~al.(2026)Jiang, Wang, Wang, Búš, Cheng, Wang, Liu, Li, Zeng, Liu, Liang, Xu, Hu, Zhang, and Dong]{jiang2026tapo}
M.~Jiang, Z.~Wang, Q.~Wang, P.~Búš, M.~Cheng, Y.~Wang, Q.~Liu, R.~Li, P.~Zeng, R.~Liu, A.~Liang, Y.~Xu, Y.~Hu, C.~Zhang, and Z.~Dong.
\newblock {TAPO}: Dynamic teacher and perturbed answer injection for policy optimization.
\newblock In \emph{Proceedings of the 40th AAAI Conference on Artificial Intelligence, {AAAI}}, pages 37462--37471, 2026.
\newblock \doi{10.1609/aaai.v40i44.41079}.
\newblock URL \url{https://doi.org/10.1609/aaai.v40i44.41079}.

\bibitem[Kool et~al.(2019)Kool, van Hoof, and Welling]{kool2019stochastic}
W.~Kool, H.~van Hoof, and M.~Welling.
\newblock Stochastic beams and where to find them: The {Gumbel-Top-k} trick for sampling sequences without replacement.
\newblock In \emph{Proceedings of the 36th International Conference on Machine Learning, {ICML}}, pages 3499--3508, 2019.
\newblock \doi{10.48550/arxiv.1903.06059}.
\newblock URL \url{https://arxiv.org/abs/1903.06059}.

\bibitem[Lamont et~al.(2025)Lamont, Walder, Dezfouli, Montague, and Norrish]{lamont20253d}
S.~Lamont, C.~Walder, A.~Dezfouli, P.~Montague, and M.~Norrish.
\newblock {3D-Prover}: Diversity driven theorem proving with determinantal point processes.
\newblock In \emph{Advances in Neural Information Processing Systems, {NeurIPS}}, pages 64741--64764, 2025.
\newblock \doi{10.52202/085713-2169}.
\newblock URL \url{https://arxiv.org/abs/2410.11133}.

\bibitem[Lamont et~al.(2026)Lamont, Walder, Montague, Dezfouli, and Norrish]{lamont2026free}
S.~Lamont, C.~Walder, P.~Montague, A.~Dezfouli, and M.~Norrish.
\newblock Free lunch for pass@k? low cost diverse sampling for diffusion language models.
\newblock \emph{arXiv}, 2026.
\newblock \doi{10.48550/arXiv.2603.04893}.
\newblock URL \url{https://arxiv.org/abs/2603.04893}.

\bibitem[Le et~al.(2026)Le, Jeon, Vu, Lai, and Yang]{le2026no}
T.-L.~V. Le, M.~Jeon, K.~Vu, V.~D. Lai, and E.~Yang.
\newblock No prompt left behind: Exploiting {Zero-Variance} prompts in {LLM} reinforcement learning via {Entropy-Guided} advantage shaping.
\newblock In \emph{Proceedings of the 14th International Conference on Learning Representations, {ICLR}}, 2026.
\newblock \doi{10.48550/arXiv.2509.21880}.
\newblock URL \url{https://arxiv.org/abs/2509.21880}.

\bibitem[Lee et~al.(2025)Lee, Chon, Jang, Lee, and Yu]{lee2025how}
S.~Lee, H.~Chon, J.~Jang, D.~Lee, and H.~Yu.
\newblock How diversely can language models solve problems? exploring the algorithmic diversity of {Model-Generated} code.
\newblock In \emph{Findings of the Association for Computational Linguistics: EMNLP 2025, {EMNLP}}, pages 152--167, 2025.
\newblock \doi{10.18653/v1/2025.findings-emnlp.10}.
\newblock URL \url{https://arxiv.org/abs/2503.00691}.

\bibitem[Li et~al.(2026{\natexlab{a}})Li, Zhang, Wang, Ma, Tang, Huang, and Duan]{li2026setpo}
C.~Li, Y.~Zhang, B.~Wang, G.~Ma, W.~Tang, H.~Huang, and N.~Duan.
\newblock {SetPO}: {Set-Level} policy optimization for {Diversity-Preserving} {LLM} reasoning.
\newblock In \emph{Proceedings of the 43rd International Conference on Machine Learning, {ICML}}, 2026{\natexlab{a}}.
\newblock \doi{10.48550/arXiv.2602.01062}.
\newblock URL \url{https://arxiv.org/abs/2602.01062}.

\bibitem[Li et~al.(2026{\natexlab{b}})Li, Zhan, Gandhi, Goodman, and Fox]{li2026quasimotto}
M.~Y. Li, A.~Zhan, K.~Gandhi, N.~D. Goodman, and E.~B. Fox.
\newblock {QuasiMoTTo}: {Quasi-Monte} carlo {Test-Time} scaling.
\newblock \emph{arXiv}, 2026{\natexlab{b}}.
\newblock \doi{10.48550/arXiv.2607.01179}.
\newblock URL \url{https://arxiv.org/abs/2607.01179}.

\bibitem[Li et~al.(2025)Li, Zhang, Yu, Saha, Khashabi, Weston, Lanchantin, and Wang]{li2025jointly}
T.~Li, Y.~Zhang, P.~Yu, S.~Saha, D.~Khashabi, J.~Weston, J.~Lanchantin, and T.~Wang.
\newblock Jointly reinforcing diversity and quality in language model generations.
\newblock \emph{arXiv}, 2025.
\newblock \doi{10.48550/arXiv.2509.02534}.
\newblock URL \url{https://arxiv.org/abs/2509.02534}.

\bibitem[Li et~al.(2026{\natexlab{c}})Li, Li, Fang, Ding, Li, Chen, Ma, Lyu, Li, Lin, Guo, Liu, and Chen]{li2026beyond}
X.~Li, Y.~Li, X.~Fang, S.~Ding, P.~Li, Y.~Chen, Y.~Ma, T.~Lyu, L.~Li, D.~Lin, Q.~Guo, Q.~Liu, and K.~Chen.
\newblock Beyond mode collapse: Distribution matching for diverse reasoning.
\newblock In \emph{Proceedings of the 43rd International Conference on Machine Learning, {ICML}}, 2026{\natexlab{c}}.
\newblock \doi{10.48550/arXiv.2605.19461}.
\newblock URL \url{https://arxiv.org/abs/2605.19461}.

\bibitem[Meister et~al.(2021)Meister, Forster, and Cotterell]{meister2021determinantal}
C.~Meister, M.~Forster, and R.~Cotterell.
\newblock Determinantal beam search.
\newblock In \emph{Proceedings of the 59th Annual Meeting of the Association for Computational Linguistics and the 11th International Joint Conference on Natural Language Processing (Volume 1: Long Papers), {ACL-IJCNLP}}, pages 6551--6562, 2021.
\newblock \doi{10.18653/v1/2021.acl-long.512}.
\newblock URL \url{https://aclanthology.org/2021.acl-long.512}.

\bibitem[Parashar et~al.(2025)Parashar, Singh, Amballa, Lai, and Rozonoyer]{parashar2025quasi}
A.~Parashar, A.~V. Singh, A.~Amballa, J.~Lai, and B.~Rozonoyer.
\newblock Quasi-random {Multi-Sample} inference for large language models.
\newblock In \emph{Proceedings of the ICLR 2025 Workshop on Frontiers in Probabilistic Inference: Learning Meets Sampling, {FIP}}, 2025.
\newblock URL \url{https://arxiv.org/abs/2411.06251}.

\bibitem[Qwen et~al.(2024)Qwen, :, An, Yang, Zhang, Hui, Zheng, Yu, Li, Liu, Huang, Wei, Lin, Yang, Tu, Zhang, Yang, Yang, Zhou, Lin, Dang, Lu, Bao, Yang, Yu, Li, Xue, Zhang, Zhu, Men, Lin, Li, Tang, Xia, Ren, Ren, Fan, Yang, Zhang, Wan, Liu, Cui, Zhang, and Qiu]{qwen2024qwen2}
Qwen, :, Y.~An, B.~Yang, B.~Zhang, B.~Hui, B.~Zheng, B.~Yu, C.~Li, D.~Liu, F.~Huang, H.~Wei, H.~Lin, J.~Yang, J.~Tu, J.~Zhang, J.~Yang, J.~Yang, J.~Zhou, J.~Lin, K.~Dang, K.~Lu, K.~Bao, K.~Yang, L.~Yu, M.~Li, M.~Xue, P.~Zhang, Q.~Zhu, R.~Men, R.~Lin, T.~Li, T.~Tang, T.~Xia, X.~Ren, X.~Ren, Y.~Fan, S.~Yang, Y.~Zhang, Y.~Wan, Y.~Liu, Z.~Cui, Z.~Zhang, and Z.~Qiu.
\newblock Qwen2.5 technical report.
\newblock \emph{arXiv}, 2024.
\newblock \doi{10.48550/arXiv.2412.15115}.
\newblock URL \url{https://arxiv.org/abs/2412.15115}.

\bibitem[Ruan et~al.(2025)Ruan, Li, Liu, Chen, Luo, Li, Liu, and Chen]{ruan2025g2}
Z.~Ruan, Y.~Li, Y.~Liu, Y.~Chen, W.~Luo, P.~Li, Y.~Liu, and G.~Chen.
\newblock G2: Guided generation for enhanced output diversity in {LLMs}.
\newblock In \emph{Proceedings of the 2025 Conference on Empirical Methods in Natural Language Processing, {EMNLP}}, pages 14116--14134, 2025.
\newblock \doi{10.18653/v1/2025.emnlp-main.713}.
\newblock URL \url{https://arxiv.org/abs/2511.00432}.

\bibitem[Sato(2022)]{sato2022word}
R.~Sato.
\newblock Word tour: One-dimensional word embeddings via the traveling salesman problem.
\newblock In \emph{Proceedings of the 2022 Conference of the North American Chapter of the Association for Computational Linguistics: Human Language Technologies, {NAACL-HLT}}, pages 2166--2172, 2022.
\newblock \doi{10.18653/v1/2022.naacl-main.157}.
\newblock URL \url{https://arxiv.org/abs/2205.01954}.

\bibitem[Shi et~al.(2020)Shi, Bieber, and Sutton]{shi2020incremental}
K.~Shi, D.~Bieber, and C.~Sutton.
\newblock Incremental sampling without replacement for sequence models.
\newblock In \emph{Proceedings of the 37th International Conference on Machine Learning, {ICML}}, pages 8785--8795, 2020.
\newblock \doi{10.48550/arxiv.2002.09067}.
\newblock URL \url{https://arxiv.org/abs/2002.09067}.

\bibitem[Shi et~al.(2025)Shi, Cui, Wu, Fang, Zhang, Li, Han, Zhu, Xu, and Zhou]{shi2025semantic}
W.~Shi, Y.~Cui, Y.~Wu, J.~Fang, S.~Zhang, M.~Li, S.~Han, J.~Zhu, J.~Xu, and X.~Zhou.
\newblock Semantic-guided diverse decoding for large language model.
\newblock In \emph{Advances in Neural Information Processing Systems, {NeurIPS}}, pages 148182--148229, 2025.
\newblock \doi{10.52202/085713-4953}.
\newblock URL \url{https://arxiv.org/abs/2506.23601}.

\bibitem[Troshin et~al.(2025)Troshin, Saparina, Fokkens, and Niculae]{troshin2025asking}
S.~Troshin, I.~Saparina, A.~Fokkens, and V.~Niculae.
\newblock Asking a language model for diverse responses.
\newblock In \emph{Proceedings of the 2nd Workshop on Uncertainty-Aware NLP, {UncertaiNLP}}, pages 66--72, 2025.
\newblock \doi{10.18653/v1/2025.uncertainlp-main.8}.
\newblock URL \url{https://arxiv.org/abs/2509.17570}.

\bibitem[Tuyls et~al.(2026)Tuyls, Foster, Krishnamurthy, and Ash]{tuyls2026representation}
J.~Tuyls, D.~J. Foster, A.~Krishnamurthy, and J.~T. Ash.
\newblock {Representation-Based} exploration for language models: From {Test-Time} to {Post-Training}.
\newblock In \emph{Proceedings of the 14th International Conference on Learning Representations, {ICLR}}, 2026.
\newblock \doi{10.48550/arXiv.2510.11686}.
\newblock URL \url{https://arxiv.org/abs/2510.11686}.

\bibitem[Vijayakumar et~al.(2018)Vijayakumar, Cogswell, Selvaraju, Sun, Lee, Crandall, and Batra]{vijayakumar2018diverse}
A.~K. Vijayakumar, M.~Cogswell, R.~R. Selvaraju, Q.~Sun, S.~Lee, D.~J. Crandall, and D.~Batra.
\newblock Diverse beam search for improved description of complex scenes.
\newblock In \emph{Proceedings of the 32nd AAAI Conference on Artificial Intelligence, {AAAI}}, pages 7371--7379, 2018.
\newblock \doi{10.1609/aaai.v32i1.12340}.
\newblock URL \url{https://doi.org/10.1609/aaai.v32i1.12340}.

\bibitem[Vilnis et~al.(2023)Vilnis, Zemlyanskiy, Murray, Passos, and Sanghai]{vilnis2023arithmetic}
L.~Vilnis, Y.~Zemlyanskiy, P.~Murray, A.~T. Passos, and S.~Sanghai.
\newblock Arithmetic sampling: Parallel diverse decoding for large language models.
\newblock In \emph{Proceedings of the 40th International Conference on Machine Learning, {ICML}}, pages 35120--35136, 2023.
\newblock \doi{10.48550/arXiv.2210.15458}.
\newblock URL \url{https://arxiv.org/abs/2210.15458}.

\bibitem[Wang et~al.(2025{\natexlab{a}})Wang, Pan, Linzen, and Black]{wang2025multilingual}
Q.~Wang, S.~Pan, T.~Linzen, and E.~Black.
\newblock Multilingual prompting for improving {LLM} generation diversity.
\newblock In \emph{Proceedings of the 2025 Conference on Empirical Methods in Natural Language Processing, {EMNLP}}, pages 6367--6389, 2025{\natexlab{a}}.
\newblock \doi{10.18653/v1/2025.emnlp-main.324}.
\newblock URL \url{https://arxiv.org/abs/2505.15229}.

\bibitem[Wang et~al.(2025{\natexlab{b}})Wang, Liu, Chen, Light, Liu, Chen, Zhang, and Cheng]{wang2025diversified}
T.~Wang, Z.~Liu, Y.~Chen, J.~Light, W.~Liu, H.~Chen, X.~Zhang, and W.~Cheng.
\newblock Diversified sampling improves scaling {LLM} inference.
\newblock \emph{arXiv}, 2025{\natexlab{b}}.
\newblock \doi{10.48550/arXiv.2502.11027}.
\newblock URL \url{https://arxiv.org/abs/2502.11027}.

\bibitem[Wang et~al.(2026{\natexlab{a}})Wang, Liu, Chen, Light, Liu, Chen, Zhang, and Cheng]{wang2026effectsampling}
T.~Wang, Z.~Liu, Y.~Chen, J.~Light, W.~Liu, H.~Chen, X.~Zhang, and W.~Cheng.
\newblock On the effect of sampling diversity in scaling llm inference.
\newblock In \emph{Proceedings of the 42nd Conference on Uncertainty in Artificial Intelligence, {UAI}}, 2026{\natexlab{a}}.
\newblock URL \url{https://arxiv.org/abs/2502.11027}.

\bibitem[Wang et~al.(2026{\natexlab{b}})Wang, Yang, Huang, Chen, May, and Lee]{wang2026optimizing}
Y.~Wang, C.~Yang, T.~Huang, M.~Chen, J.~May, and M.~Lee.
\newblock Optimizing diversity and quality through {Base-Aligned} model collaboration.
\newblock In \emph{Proceedings of the 43rd International Conference on Machine Learning, {ICML}}, 2026{\natexlab{b}}.
\newblock \doi{10.48550/arXiv.2511.05650}.
\newblock URL \url{https://arxiv.org/abs/2511.05650}.

\bibitem[Wong et~al.(2025)Wong, Orlovskiy, Shypula, Luo, Seshia, and Gonzalez]{wong2025simplestrat}
J.~Wong, Y.~Orlovskiy, A.~Shypula, M.~Luo, S.~A. Seshia, and J.~E. Gonzalez.
\newblock {SimpleStrat}: Diversifying language model generation with stratification.
\newblock In \emph{Advances in Neural Information Processing Systems, {NeurIPS}}, pages 116906--116935, 2025.
\newblock \doi{10.52202/085713-3897}.
\newblock URL \url{https://arxiv.org/abs/2410.09038}.

\bibitem[Wu et~al.(2026)Wu, Wan, Yu, Yang, Huang, Tsang, and You]{wu2026time}
J.~Wu, Z.~Wan, X.~Yu, Y.~Yang, Y.~Huang, I.~W. Tsang, and Y.~You.
\newblock {Time-Annealed} perturbation sampling: Diverse generation for diffusion language models.
\newblock \emph{arXiv}, 2026.
\newblock \doi{10.48550/arXiv.2601.22629}.
\newblock URL \url{https://arxiv.org/abs/2601.22629}.

\bibitem[Xu et~al.(2026)Xu, Chen, Qiu, Yan, Luo, Cheng, He, and Tong]{xu2026prune}
H.~Xu, S.~Chen, R.~Qiu, Y.~Yan, C.~Luo, M.~X. Cheng, J.~He, and H.~Tong.
\newblock Prune as you generate: Online rollout pruning for faster and better {RLVR}.
\newblock In \emph{Proceedings of the 64th Annual Meeting of the Association for Computational Linguistics (Volume 1: Long Papers), {ACL}}, pages 13876--13893, 2026.
\newblock \doi{10.18653/v1/2026.acl-long.632}.
\newblock URL \url{https://arxiv.org/abs/2603.24840}.

\bibitem[Yao et~al.(2025)Yao, Cheng, Wu, Wu, and Tan]{yao2025diversity}
J.~Yao, R.~Cheng, X.~Wu, J.~Wu, and K.~Tan.
\newblock {Diversity-Aware} policy optimization for large language model reasoning.
\newblock In \emph{Advances in Neural Information Processing Systems, {NeurIPS}}, pages 94801--94826, 2025.
\newblock \doi{10.52202/085713-3169}.
\newblock URL \url{https://arxiv.org/abs/2505.23433}.

\bibitem[Yu et~al.(2025)Yu, Zhang, Zhu, Yuan, Zuo, Yue, Dai, Fan, Liu, Liu, Liu, Liu, Lin, Lin, Ma, Sheng, Tong, Zhang, Zhang, Zhang, Zhang, Zhu, Zhu, Chen, Chen, Wang, Yu, Song, Wei, Zhou, Liu, Ma, Zhang, Yan, Wu, and Wang]{yu2025dapo}
Q.~Yu, Z.~Zhang, R.~Zhu, Y.~Yuan, X.~Zuo, Y.~Yue, W.~Dai, T.~Fan, G.~Liu, J.~Liu, L.~Liu, X.~Liu, H.~Lin, Z.~Lin, B.~Ma, G.~Sheng, Y.~Tong, C.~Zhang, M.~Zhang, R.~Zhang, W.~Zhang, H.~Zhu, J.~Zhu, J.~Chen, J.~Chen, C.~Wang, H.~Yu, Y.~Song, X.~Wei, H.~Zhou, J.~Liu, W.-Y. Ma, Y.-Q. Zhang, L.~Yan, Y.~Wu, and M.~Wang.
\newblock {DAPO}: An {Open-Source} {LLM} reinforcement learning system at scale.
\newblock In \emph{Advances in Neural Information Processing Systems, {NeurIPS}}, pages 113222--113244, 2025.
\newblock \doi{10.52202/085713-3775}.
\newblock URL \url{https://arxiv.org/abs/2503.14476}.

\bibitem[Zhu et~al.(2026)Zhu, Khanh, Cheung, Yue, and Nguyen]{zhu2026exploring}
D.~Zhu, L.~T.~H. Khanh, A.~Y.-M. Cheung, M.-C. Yue, and V.~A. Nguyen.
\newblock Exploring diverse generation paths via inference-time stiefel activation steering.
\newblock In \emph{Proceedings of the 14th International Conference on Learning Representations, {ICLR}}, 2026.
\newblock \doi{10.48550/arXiv.2601.22010}.
\newblock URL \url{https://arxiv.org/abs/2601.22010}.

\end{thebibliography}
